\documentclass{article} 
\usepackage{iclr2026_conference,times}


\usepackage{amsmath,amsfonts,bm}









\def\eqref#1{equation~\ref{#1}}









\def\1{\bm{1}}










\DeclareMathAlphabet{\mathsfit}{\encodingdefault}{\sfdefault}{m}{sl}
\SetMathAlphabet{\mathsfit}{bold}{\encodingdefault}{\sfdefault}{bx}{n}













\usepackage{hyperref}
\usepackage{url}
\usepackage[linesnumbered,ruled,vlined]{algorithm2e}
\usepackage{algorithmic}
\usepackage{booktabs}
\usepackage{colortbl}
\usepackage{graphicx}
\usepackage{subcaption}
\usepackage{amsthm}

\newtheorem{theorem}{Theorem}

\title{VCRL: Variance-based Curriculum Reinforcement Learning for Large Language Models}



\author{Guochao Jiang, Wenfeng Feng, Guofeng Quan, Chuzhan Hao, Yuewei Zhang, \\\textbf{Guohua Liu, Hao Wang}\thanks{Corresponding author} \\ Alibaba Cloud Computing \\ \texttt{anyue.jgc@alibaba-inc.com} \\ \texttt{cashenry@126.com}
}

%

\iclrfinalcopy 
\begin{document}

\maketitle

\begin{abstract}
Policy-based reinforcement learning currently plays an important role in improving LLMs on mathematical reasoning tasks. However, existing rollout-based reinforcement learning methods (GRPO, DAPO, GSPO, etc.) fail to explicitly consider LLMs' learning ability for samples of different difficulty levels, which is contrary to the human cognitive process of mathematical reasoning tasks from easy to difficult. Intuitively, we find that the variance of the rollout group's reward in RLVR partly reflects the difficulty of the current sample for LLMs. Samples that are too easy or too difficult have a lower variance, while samples with moderate difficulty have a higher variance. Based on this, we propose VCRL, a curriculum reinforcement learning framework that dynamically controls the difficulty of training samples based on the variance of group rewards. Experiments on five mathematical benchmarks and two models reveal the advantages of VCRL over the current LLM RL baselines.
\end{abstract}

\section{Introduction}

The new generation of large language models (LLMs) that use long Chain-of-Thoughts (CoTs) for reasoning \citep{llm-survey} have achieved remarkable results in information extraction \citep{information-extraction, toner, ooe}, mathematics \citep{math-survey}, code \citep{code-survey}, and agent \citep{agent-survey, agent-survey2} fields, including GPT-5\footnote{\url{https://openai.com/index/gpt-5-system-card}}, GPT-OSS \citep{gpt-oss}, DeepSeek-R1 \citep{deepseek-r1}, and Kimi k1.5 \citep{k1.5}. A notable feature of this type of LLMs is the phenomenon called Test-Time Scaling (TTS) \citep{tts-survey}, which generates long CoTs to scale performance. Reinforcement Learning with Verifiable Rewards (RLVR) \citep{rlvr} has been proven to be an effective technique for achieving TTS in the post-training process.

Recently, Reinforcement Learning (RL) methods have shown significantly better generalization performance in improving LLM reasoning capabilities compared to traditional Supervised Fine-Tuning (SFT) \citep{sft-rl}. SFT relies on high-quality, labeled data from human annotations or stronger model distillation, while RL relies primarily on the model's own exploration. Rollout-based reinforcement learning methods represented by Group Relative Policy Optimization (GRPO) \citep{grpo} require the model to generate multiple trajectories for each training sample and learn based on the rewards of the generated trajectories, which can continuously expand the boundaries of LLMs' capabilities through the continuous RL process with diverse training samples.

However, existing rollout-based RL methods do not consider how well the model’s current abilities match the difficulty of training samples. In human learning, people usually start with easy tasks and move to harder ones, an approach called Curriculum Learning (CL) \citep{cl, clsurvey}. Rollout-based RL methods have the model explore rollouts generated by the training samples, without considering if those samples are easy or hard. This does not help LLMs learn efficiently from samples with different levels of difficulty. Also, the model’s skills change during RL training, so the difficulty of training samples can vary for the model at different stages. Because of this, pre-sorting training samples by fixed difficulty is not effective.

To address these limitations, we introduce a curriculum reinforcement learning framework called VCRL. It dynamically adjusts the difficulty of training samples based on the variance of group rewards. We find that the variance in rollout group rewards in RLVR partly reflects how hard a sample is for LLMs. With RLVR’s current sparse reward system, samples that are too hard often get only 0 rewards, leading to low variance; this also happens with samples that are too easy. When samples are more uncertain, such as when half of the rollouts receive a reward of 1 and the other half receive 0, the model is at a key learning point for that sample. VCRL uses \textbf{Variance-based Dynamic Sampling} to select these samples for training, helping control the quality of the training batch. Group variance also gives a way to measure sample difficulty for the current state of the model. Therefore, VCRL uses \textbf{Replay Learning} with a memory bank to further boost training efficiency.

Our contributions are as follows:

\begin{itemize}
    \item We introduce VCRL, a curriculum reinforcement learning framework that adjusts the difficulty of training samples based on the variance of group rewards. By focusing on samples with high reward variance, VCRL selects those most valuable for current model training.
    \item Building on group variance, we further introduce Replay Learning with a memory bank to control training stability and improve training efficiency. By updating and utilizing the memory bank, VCRL ensures high variance of samples in the training batch, thus achieving higher training value.
    \item We conduct extensive experiments on five benchmark datasets to justify VCRL's advantage on LLM's efficient Test-Time Scaling over some SOTA RL methods. Our results show consistent performance gains across different models, validating the effectiveness and robustness of our VCRL.
\end{itemize}

\section{Preliminaries}

In this section, we review the current policy-based reinforcement learning methods in LLM, especially the rollout-based like GRPO and some variants.

\subsection{Proximal Policy Optimization (PPO)}

PPO \citep{ppo} limits the update of the current policy to the proximal region of the old policy through the clipping mechanism. Specifically, give a dataset $\mathcal{D}$, $x$ is the query and $y$ is the response. For the policy model $\pi_\theta$ parameterized by $\theta$, the likelihood by the policy $\pi_\theta$ is given by $\pi_\theta(y | x) = \prod_{t=1}^{|y|} \pi_\theta(y_t | x, y_{<t})$, where $|y|$ is the number of tokens in $y$. In RLVR, there is a verifier $r$ that can score a given query-response pair $(x,y)$ and obtain a reward $r(x,y) \in [0,1]$. PPO optimizes the following objective for policy optimization to update the actor in the proximal region of the old policy $\pi_{\theta_{\text{old}}}$:
\begin{align}
    \mathcal{J}_\text{PPO}(\theta) = \mathbb{E}_{x \sim \mathcal{D}, y \sim \pi_{\theta_{\text{old}}}(\cdot | x)} \left[\frac{1}{|y|} \sum_{t=1}^{|y|}\min\left( r_t(\theta) \hat{A}_t, \text{clip}(r_t(\theta), 1- \epsilon, 1 + \epsilon) \hat{A}_t \right)  \right],
\end{align}
where the importance ratio of the token $y_t$ is given by $r_t(\theta) = \frac{\pi_\theta(y_t | x, y_{<t})}{\pi_{\theta_{\text{old}}}(y_t | x, y_{<t})}$, $\epsilon$ is the clipping range of the importance ratio, and the advantage $\hat{A}_t$ of $y_t$ is estimated using a value model by Generalized Advantage Estimator (GAE) \citep{gae}.

PPO relies on the value model to evaluate the current state. Typically, the value model and the trained model have similar structures and parameters, resulting in significant computational and memory costs. Furthermore, the accuracy of the value model itself limits the effectiveness of the PPO algorithm, especially for long response and sparse reward in complex tasks for LLM.

\subsection{Group Relative Policy Optimization (GRPO) and variants}

GRPO \citep{grpo} calculates the relative advantages of each response within a group of responses generated by LLM to the same query, eliminating the need to the value model. Specifically, GRPO optimizes the following objective for policy optimization to update the actor within the group of responses (we omit the KL regularization term for brevity):
\begin{align}
\label{eq:grpo}
    \mathcal{J}_\text{GRPO}(\theta) &= \mathbb{E}_{x \sim \mathcal{D}, \{y_i\}_{i=1}^G \sim \pi_{\theta_{\text{old}}(\cdot|x)}}\notag \\ &\left[ \frac{1}{G} \sum_{i=1}^G \frac{1}{|y_i|} \sum_{t=1}^{|y_i|} \min\left( r_{i,t}(\theta)\hat{A}_{i,t}, \text{clip}(r_{i,t}(\theta), 1-\epsilon, 1+\epsilon)\hat{A}_{i,t} \right)\right],
\end{align}
where $G$ is the number of generated responses to the same query $x$, the importance ratio $r_{i,t}(\theta)$ and advantage $\hat{A}_{i,t}$ of token $y_{i,t}$ are given by
\begin{align}
    r_{i,t}(\theta) = \frac{\pi_\theta(y_{i,t} | x, y_{i, <t})}{\pi_{\theta_{\text{old}}}(y_{i,t} | x, y_{i, <t})}, ~ \hat{A}_{i,t} = \frac{r(x, y_i) - \text{mean}\left(\{r(x,y_i)\}_{i=1}^G\right)}{\text{std}\left(\{r(x,y_i)\}_{i=1}^G\right)}.
\end{align}

Based on GRPO, Decoupled Clip and Dynamic sampling Policy Optimization (DAPO) \citep{dapo} removes the KL divergence regularization and introduces the clip-higher and dynamic sampling with token-level loss, further improving the training stability and performance for LLMs. Specifically, DAPO optimizes the following objective for policy optimization:
\begin{align}
    \mathcal{J}_\text{DAPO}(\theta) &= \mathbb{E}_{x \sim \mathcal{D}, \{y_i\}_{i=1}^G \sim \pi_{\theta_{\text{old}}(\cdot|x)}} \notag\\ &\left[ \frac{1}{\sum_{i=1}^G |y_i|} \sum_{i=1}^G\sum_{t=1}^{|y_i|} \min\left( r_{i,t}(\theta)\hat{A}_{i,t}, \text{clip}(r_{i,t}(\theta), 1-\epsilon_\text{low}, 1+\epsilon_\text{high})\hat{A}_{i,t} \right)\right], \\
    &\text{s.t.} ~ 0 < \left| \{y_i | \texttt{is\_equivalent}(y^*, y_i)\} \right| < G,\notag
\end{align}
where $\epsilon_\text{low}$ and $\epsilon_\text{high}$ are the low and high clipping range for the importance ratio respectively, and $y^*$ is the correct answer.

Based GRPO, Group Sequence Policy Optimization (GSPO) \citep{gspo} uses sequence-level importance ratio to replace the original token-level importance ratio to match the sentence-level reward in the generation task and optimization objective, thus achieving remarkable improvements. Specifically, GSPO optimizes the following objective for policy optimization:
\begin{align}
    \mathcal{J}_\text{GSPO}(\theta) = \mathbb{E}_{x \sim \mathcal{D}, \{y_i\}_{i=1}^G \sim \pi_{\theta_{\text{old}}(\cdot|x)}} \left[ \frac{1}{G} \sum_{i=1}^G \min \left( s_i(\theta)\hat{A}_i, \text{clip}(s_i(\theta), 1 - \epsilon, 1 + \epsilon)\hat{A}_i \right) \right],
\end{align}
where the group-based advantage estimation and importance ratio are given by
\begin{align}
    \hat{A}_i = \frac{r(x,y_i) - \text{mean}\left(\{r(x,y_i)\}_{i=1}^G\right)}{\text{std}\left(\{r(x,y_i)\}_{i=1}^G\right)}, ~ s_i(\theta) = \left(\frac{\pi_\theta(y_i|x)}{\pi_{\theta_\text{old}}(y_i | x)}\right)^{\frac{1}{|y_i|}}.
\end{align}

\section{Variance-based Curriculum Reinforcement Learning}

In this section, we introduce Variance-based Curriculum Reinforcement Learning (VCRL), shown in Figure \ref{fig:vcrl}. First, we explain Variance-based Dynamic Sampling and how it helps identify the difficulty and value of training samples. Next, we combine Replay Learning with a memory bank to focus training on high-value samples, which improves RL training efficiency and stability.

\begin{figure}[t]
    \centering
    \includegraphics[width=0.99\linewidth]{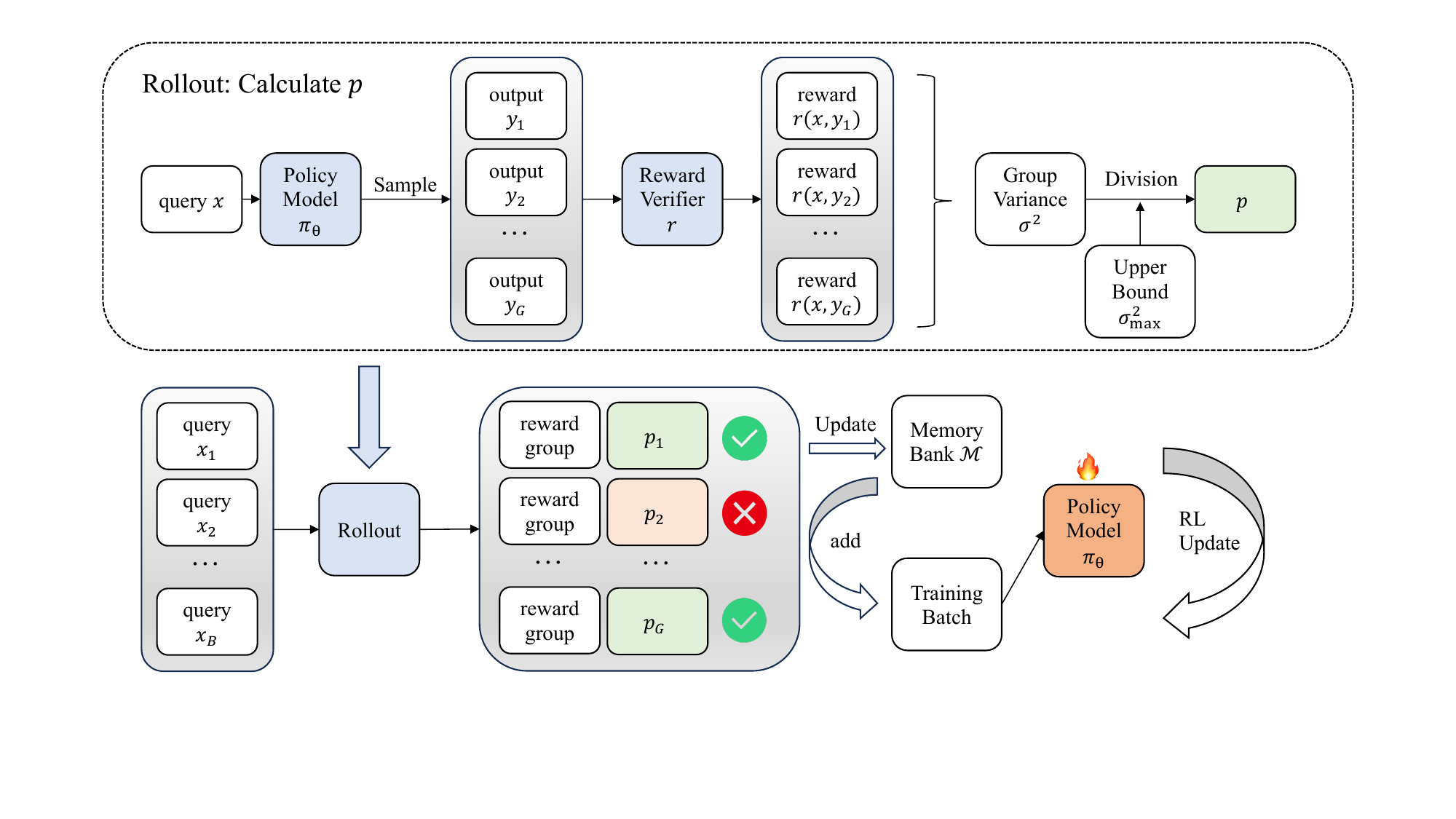}
    \caption{An illustration of the proposed VCRL method. For rollout-based RL training, VCRL first calculates our proposed $p$ for each query's rollout results and filters queries based on their $p$. VCRL then uses the existing memory bank $\mathcal{M}$ to update and add training samples. Finally, VCRL performs the standard RL update for this training batch.}
    \label{fig:vcrl}
\end{figure}

\subsection{Variance-based Dynamic Sampling}

As discussed above, existing rollout-based RL methods do not properly match model capabilities with sample difficulty during training. This problem mainly shows up in two ways:

\begin{enumerate}
    \item \textbf{Dynamic Model Parameters}: During training, gradient backpropagation is performed using the objective function calculated from the training samples. This updates the model parameters to improve its performance on current samples. Model parameters keep changing, so the model may perform differently on the same samples at different stages of training.
    \item \textbf{Unordered Sample Difficulty}: For most training datasets and algorithms, the difficulty of training samples is not considered. Some tasks, like search \citep{feng2025airrag, hao2025dynasearcher} and tool use \citep{shen2024llm, ToolSandbox}, are hard to define by difficulty. Also, sorting samples by difficulty requires a lot of data preprocessing. As a result, most datasets include samples that are not ordered by difficulty.
\end{enumerate}

Dynamic model parameters and unordered sample difficulty make it too expensive and hard to use ordered training samples based on predefined difficulty. Samples that are hard for the model early in training often become easier later. So, the indicator of training sample difficulty must be adjusted dynamically as the model changes.

Multiple rollouts for the same query can help measure how hard a training sample is for the current model. Formally, for a query $x$, if it is too easy for model $\pi_\theta$, then $\mathbb{E}_{y \sim \pi_\theta(\cdot|x)}\left[r(x, y)\right] \approx 1$. If $x$ is too hard, then $\mathbb{E}_{y \sim \pi_\theta(\cdot|x)}\left[r(x, y)\right] \approx 0$. Both easy and hard samples have low group reward variance. So, we can use the variance of group rewards to pick samples that are better suited for the current model. Samples with higher variance are neither too easy nor too hard, meaning the difference between the probabilities of positive and negative outcomes is small:
\begin{align}
    \left| \mathbb{P}_{y \sim \pi_\theta(\cdot|x)}\left[r(x,y) = 1\right] -  \mathbb{P}_{y \sim \pi_\theta(\cdot|x)}\left[r(x,y) = 0\right]\right| < \varepsilon.
\end{align}

In RLVR, for the binary reward distribution, the group variance for the query $x$ is given by
\begin{align}
    \text{Var}_{y \sim \pi_\theta(\cdot|x)}(r(x,y)) = \mathbb{E}_{y \sim \pi_\theta(\cdot|x)}\left[(r(x,y) - \mathbb{E}_{y \sim \pi_\theta(\cdot|x)}\left[r(x,y)\right])^2\right].
\end{align}

If there are $k$ rollouts with a reward of 1, the unbiased estimator of the group variance can be written as
\begin{align}
    \sigma^2 &= \frac{1}{G-1} \sum_{i=1}^G \left[r(x,y_i) - \frac{1}{G}\sum_{i=1}^G r(x,y_i)\right]^2 \notag\\
    &= \frac{1}{G-1} \sum_{i=1}^G \left[r(x,y_i) - \frac{k}{G}\right]^2 \notag \\
    &= \frac{k(G-k)}{G(G-1)}.
\end{align}

When $k=\left \lfloor \frac{G}{2} \right \rfloor $, the maximum value of the estimator is
\begin{align}
    \sigma_{\max}^2 = \left\{\begin{matrix}
\frac{G}{4(G-1)},  & G~\text{is even}, \\
\frac{G+1}{4G},  & G~\text{is odd}.
\end{matrix}\right.
\end{align}

Obviously, the group variance cannot exceed $\sigma_{\max}^2$ in any case, so we can use the normalized group variance $p=\frac{\sigma^2}{\sigma_{\max}^2}$ to measure the value of the current query $x$ for the model $\pi_\theta$. Training with samples that have high $p$ helps the model learn areas where it is less skilled, which improves the model more effectively than using unordered samples. See Appendix Section \ref{appendix:variance} for more discussion.

\subsection{Replay Learning}

Based on the normalization value $p$ discussed above, we can dynamically sample queries during training using threshold rules. This helps ensure that each training sample has high value for the model. For unordered training datasets, each sampled query can only obtain its $p$ value after a long rollout, so we use variance-based dynamic sampling. Calculating $p$ for each training sample requires significant computational resources and time, which can be expensive if used only for sampling.

To address this, we propose building a high-value memory bank using $p$ and maintaining it with a momentum update method. This lets us apply curriculum learning with data replay based on group variance, as shown in Algorithm \ref{algorithm:vcrl}. Specifically, each time we sample from the training set $\mathcal{D}$, we get a query batch $\{x_j\}_{j=1}^B$, where $B$ is the batch size. First, we get the corresponding response set $\{y_{j,i}\}_{i=1}^G$ and reward set $\{r(x_j, y_{j,i})\}_{i=1}^G$, then calculate $p_j$ for each query $x_j$. If $p_j \ge \kappa$, where $\kappa \in [0,1]$ is a predefined threshold, we keep the query $x_j$. Otherwise, we remove it from the batch and perform variance-based dynamic sampling. 

Suppose $M$ queries are removed from a batch of $B$. To keep the batch size unchanged, we replace the missing $M$ queries by sampling $B - M$ queries from the memory bank $\mathcal{M}$. The memory bank $\mathcal{M}$ is implemented as a priority queue, where each entry is a query $x_j$, and the priority $P(x_j)$ is updated based on momentum and the number of steps since it was last accessed, $\beta(x_j)$:
\begin{align}
    P(x_j) \leftarrow \alpha P(x_j) + (1-\alpha) \beta(x_j),
\end{align}
where $\alpha$ is the momentum constant and the $P(x_j)$ is initialized using $p_j$.

The proposed VCRL based on GRPO optimizes the following objective for policy optimization:
\begin{align}
\label{eq:vcrl}
    \mathcal{J}_\text{VCRL}(\theta) &= \mathbb{E}_{x \sim \mathcal{D \cup\mathcal{M}}, \{y_i\}_{i=1}^G \sim \pi_{\theta_{\text{old}}(\cdot|x)}}\notag \\ &\left[ \frac{1}{G} \sum_{i=1}^G \frac{\mathbb{I}\left(p_i = \frac{\sigma_i^2}{\sigma_{i,\text{max}}^2} \ge \kappa\right)}{|y_i|} \sum_{t=1}^{|y_i|} \min\left( r_{i,t}(\theta)\hat{A}_{i,t}, \text{clip}(r_{i,t}(\theta), 1-\epsilon, 1+\epsilon)\hat{A}_{i,t} \right)\right],
\end{align}
where the calculation of $p_i$ and the memory bank $\mathcal{M}$ mechanism are as described above, and $\mathbb{I}(\cdot)$ is the indicator function. See Appendix Section \ref{appendix:policy_gradient} for a comparison of theoretical perspecitves on GRPO and VCRL.

\begin{algorithm}
\caption{VCRL: Variance-based Curriculum Reinforcement Learning}
\label{algorithm:vcrl}
\begin{algorithmic}[1]
\REQUIRE
Training Set $\mathcal{D}$, Reward Verifier $r$, $p$-threshold $\kappa$, Policy Model $\pi_\theta$, Momentum Constant $\alpha$, Training Batch Size $B$, Rollout Group Size $G$

\STATE Initialize $\mathcal{M} \leftarrow \text{PriorityQueue}()$
\WHILE{Training}
\STATE Sample $\{x_j\}_{j=1}^B \sim \mathcal{D}$, $M \leftarrow 0$
\FOR{$j=1$ to $B$}
\STATE Sample $\{y_{j,i}\}_{i=1}^G \sim \pi_\theta(\cdot | x_j)$
\STATE Calculate Reward $\{r(x_j, y_{i,j})\}_{i=1}^G$
\STATE Calculate $p_j$ for $x_j$
\IF{$p_j < \kappa$}
    \STATE Remove $x_j$ from Training Batch
    \STATE $M \leftarrow M + 1$
\ENDIF
\ENDFOR
\STATE Pop $M$ queries from $\mathcal{M}$ and add them to the Training Batch
\FOR{$x \in \mathcal{M}$}
    \STATE $\beta(x) \leftarrow \beta(x) + 1$
    \STATE $P(x) \leftarrow \alpha P(x) + (1 - \alpha)\beta(x)$
\ENDFOR
\STATE Apply RL update using the Augmented Training Batch $\mathcal{B}$
\FOR{$x \in \mathcal{B}$}
\STATE Calculate $p$ for $x$
\IF{$p \ge \kappa$}
    \STATE Push $x$ into $\mathcal{M}$ with priority $P(x) = p$ and $\beta(x) = 0$
\ENDIF
\ENDFOR
\ENDWHILE
\end{algorithmic}
\end{algorithm}
\section{Experiments}

\subsection{Experimental setup}

\textbf{Benchmarks.} In this work, we focus specifically on mathematical reasoning tasks to evaluate our VCRL algorithm. For mathematical reasoning tasks, we use AIME-2024\footnote{\url{https://huggingface.co/datasets/Maxwell-Jia/AIME_2024}}, AIME-2025\footnote{\url{https://huggingface.co/datasets/yentinglin/aime_2025}}, MATH500 \citep{math500}, OlympiadBench \citep{olympiadbench}, and AMC23\footnote{\url{https://huggingface.co/datasets/AI-MO/aimo-validation-amc}}. Among them, AIME-2024 and AIME-2025 are used as high-difficulty benchmarks to effectively evaluate the performance of VCRL and other baseline RL methods in multiple difficulty levels.

\textbf{Implementation Details.} For training dataset, we use DAPO-Math-17K\footnote{\url{https://huggingface.co/datasets/BytedTsinghua-SIA/DAPO-Math-17k}} to improve training stability, which consists of 17K prompts, each paired with an interger as the answer. We implement VCRL and conduct all experiments based on the verl \citep{verl} framework. For hyper-parameters, we utilize the AdamW \citep{adamw} optimizer with a constant learning rate of $1 \times 10^{-6}$. For rollout, the prompt batch size is $B=128$ and we sample $G=16$ responses for each prompt. For training, we train 500 steps to ensure convergence. The maximum number of tokens for generation is set to 4,096 tokens. For evaluation on benchmarks, we repeat the evaluation set for 16 times and report avg@16 for the stability of the results. The inference hyperparameters of evaluation are set to temperature 0.6 and top-p 0.95. For VCRL, we set the variance threshold $\kappa$ to 0.3 in first 20 steps and 0.8 in remaining steps, and the momentum constant $\alpha$ is set to 0.9. We implement VCRL based on GRPO's RL update. For memory bank, we allow up to 2 replays for the same sample to ensure the diversity of training sample. We conduct all experiments on a server with 8$\times$NVIDIA H20-3e GPUs and an Intel$^\text{®}$ Xeon$^\text{®}$ Platinum 8575C CPU.

\textbf{Baselines and Models.} We mainly use GRPO \citep{grpo}, DAPO \citep{dapo} and GSPO \citep{gspo} as the baselines for our VCRL comparison. For Clip-Higher mechanism in DAPO, we set the clipping parameter $\epsilon_\text{low}$ to 0.2 and $\epsilon_\text{high}$ to 0.28, which is aligned with the DAPO setting in the original paper. For GSPO, we set the clipping parameter $\epsilon$ to 0.0003. For models, we use the Qwen3 \citep{qwen3} series models for training, including \textit{Qwen3-4B-Base} and \textit{Qwen3-8B-Base}.

\subsection{Main Results}

We conduct a comprehensive evaluation of our proposed method, VCRL, against several strong LLM RL baselines on a diverse suite of mathematical reasoning benchmarks. As detailed in Table \ref{table:main}, the experiments are performed on two models, \textit{Qwen3-4B-Base} and \textit{Qwen3-8B-Base}, to assess the scalability and generalizability of our method. The results unequivocally demonstrate the superiority of VCRL. Across all five benchmarks and on both model sizes, VCRL consistently achieves state-of-the-art performance, outperforming all baseline methods, including GRPO, DAPO, and GSPO. This consistent dominance, indicated by the bolded scores, highlights the robustness and effectiveness of our proposed methodology.

A deeper analysis reveals the substantial performance gains enabled by VCRL. For instance, on the \textit{Qwen3-8B-Base} model, VCRL achieves an average score of 57.76, a significant margin of over 4.67 points above the strongest baseline, GSPO (53.09), and a remarkable 24.8 points improvement over the base model. This trend holds for the \textit{Qwen3-4B-Base} model, where VCRL elevates the average performance from 26.68 (Base Model) to 49.43, far surpassing the gains from other RL techniques. Notably, the performance leap is particularly pronounced on highly challenging, competition-level datasets such as AIME-2024 and AIME-2025, suggesting that VCRL is exceptionally proficient at unlocking the complex, multi-step reasoning capabilities essential for advanced mathematical problem-solving. These empirical findings strongly validate VCRL as a superior alignment strategy for enhancing the mathematical reasoning prowess of LLMs.

\begin{table}[t]
    \centering
    \caption{Main performance comparison of VCRL against other RL baselines on Qwen3 models.}
    \resizebox{\textwidth}{!}{
    \begin{tabular}{@{}c |*{5}{c} |c@{}}
        \toprule
        \textbf{Method} & \textbf{AIME-2024} & \textbf{AIME-2025} & \textbf{MATH500} & \textbf{OlympiadBench} & \textbf{AMC23} & \textbf{Avg.} \\
        \hline
        \rowcolor[rgb]{0.9,0.9,0.9}\multicolumn{7}{c}{\textit{Starting from Qwen3-4B-Base}} \\
        \hline
        \quad Base Model & 9.58 & 4.79 & 56.69 & 27.27 & 35.09 & 26.68 \\
        \quad + GRPO & 15.63 & 12.92 & 80.78 & 45.39 & 54.07 & 41.76 \\
        \quad + DAPO & 14.79 & 12.29 & 79.86 & 44.23 & 51.81 & 40.60 \\
        \quad + GSPO & 14.58 & 10.42 & 79.90 & 44.38 & 51.13 & 40.08 \\
        \quad + VCRL & \textbf{23.96} & \textbf{22.71} & \textbf{86.48} & \textbf{53.24} & \textbf{60.77} & \textbf{49.43} \\
        \hline
        \rowcolor[rgb]{0.9,0.9,0.9}\multicolumn{7}{c}{\textit{Starting from Qwen3-8B-Base}} \\
        \hline
        \quad Base Model & 10.83 & 10.00 & 68.75 & 34.10 & 41.11 & 32.96 \\
        \quad + GRPO & 23.13 & 21.88 & 86.94 & 54.02 & 65.29 & 50.25 \\
        \quad + DAPO & 22.08 & 20.42 & 87.14 & 53.52 & 64.01 & 49.43 \\
        \quad + GSPO & 27.29 & 22.92 & 89.23 & 56.75 & 69.28 & 53.09 \\
        \quad + VCRL & \textbf{34.38} & \textbf{27.08} & \textbf{91.99} & \textbf{60.21} & \textbf{75.15} & \textbf{57.76} \\
        \bottomrule
    \end{tabular}}
    \label{table:main}
\end{table}

\subsection{Performance Trend}

During RL training, the LLM starts with low ability and steadily improves, showing an upward trend on benchmark tests. To illustrate how VCRL compares to baseline methods during training, we show how model performance changes with training steps on each benchmark, as seen in Figure \ref{fig:dynamics_qwen3-4b-base} for \textit{Qwen3-4B-Base} and Figure \ref{fig:dynamics_qwen3-8b-base} for \textit{Qwen3-8B-Base}.

\begin{figure}[t]
  \centering
  \begin{subfigure}{0.32\textwidth}
    \includegraphics[width=\linewidth]{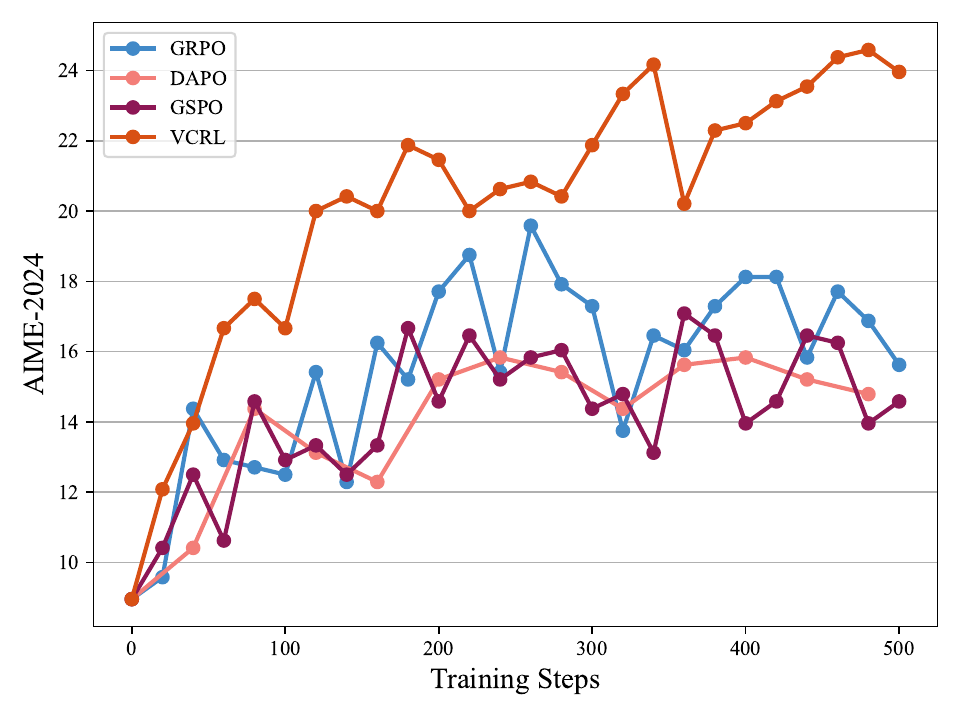}
  \end{subfigure}
  \begin{subfigure}{0.32\textwidth}
    \includegraphics[width=\linewidth]{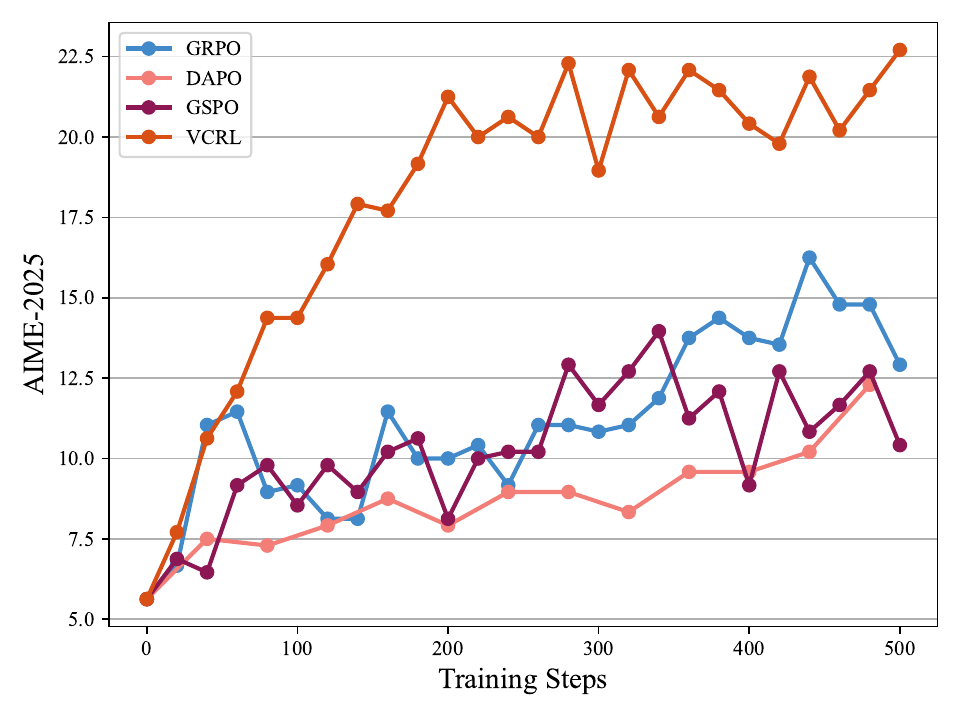}
  \end{subfigure}
  \begin{subfigure}{0.32\textwidth}
    \includegraphics[width=\linewidth]{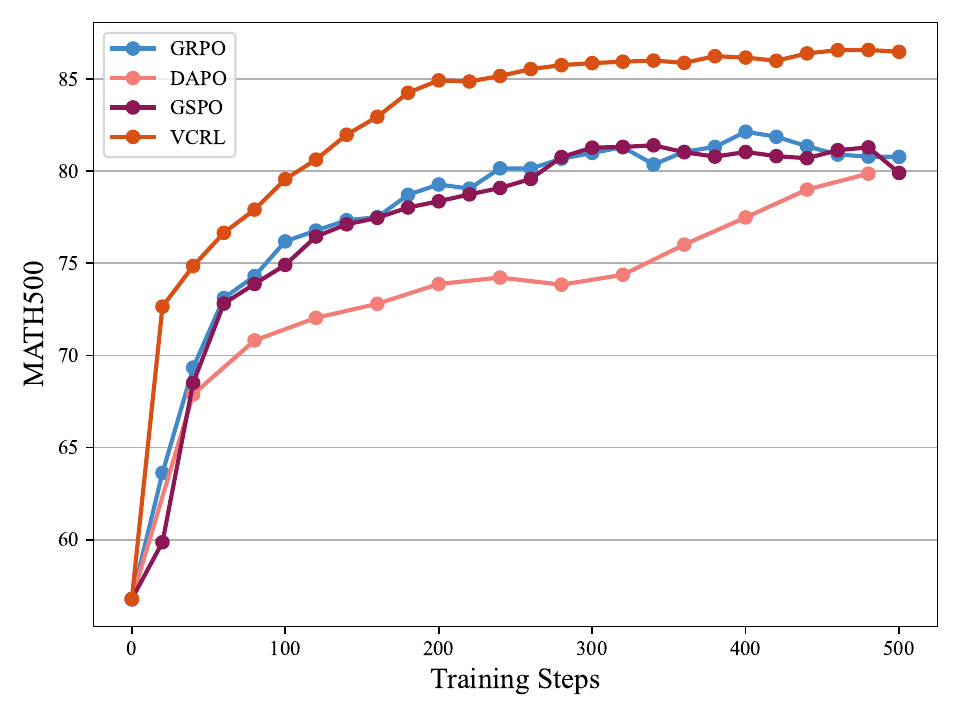}
  \end{subfigure}
  \begin{subfigure}{0.32\textwidth}
    \includegraphics[width=\linewidth]{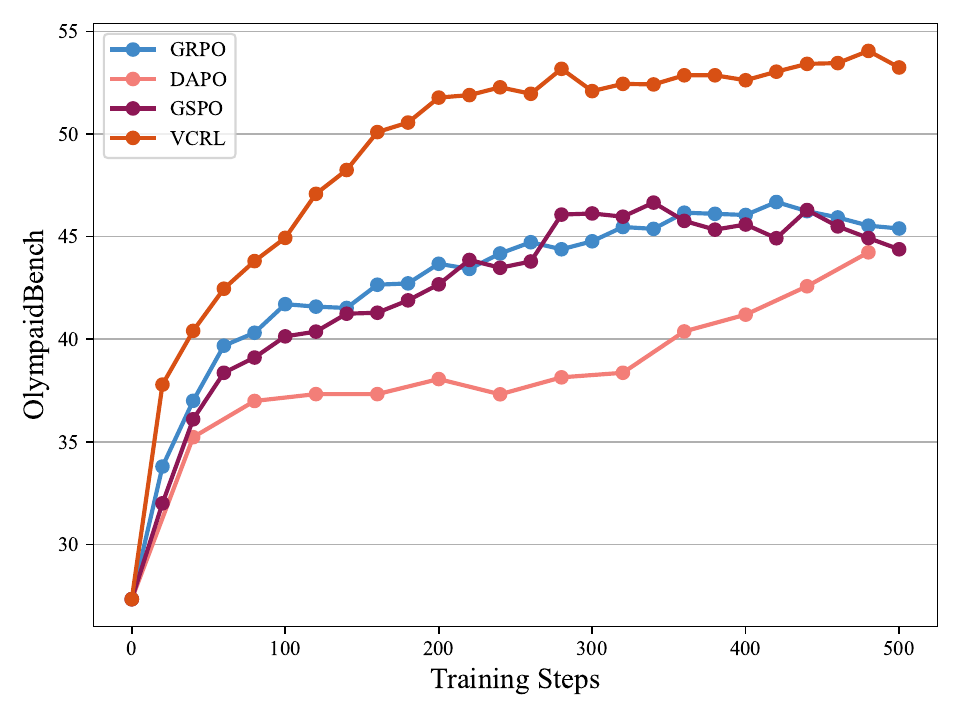}
  \end{subfigure}
  \begin{subfigure}{0.32\textwidth}
    \includegraphics[width=\linewidth]{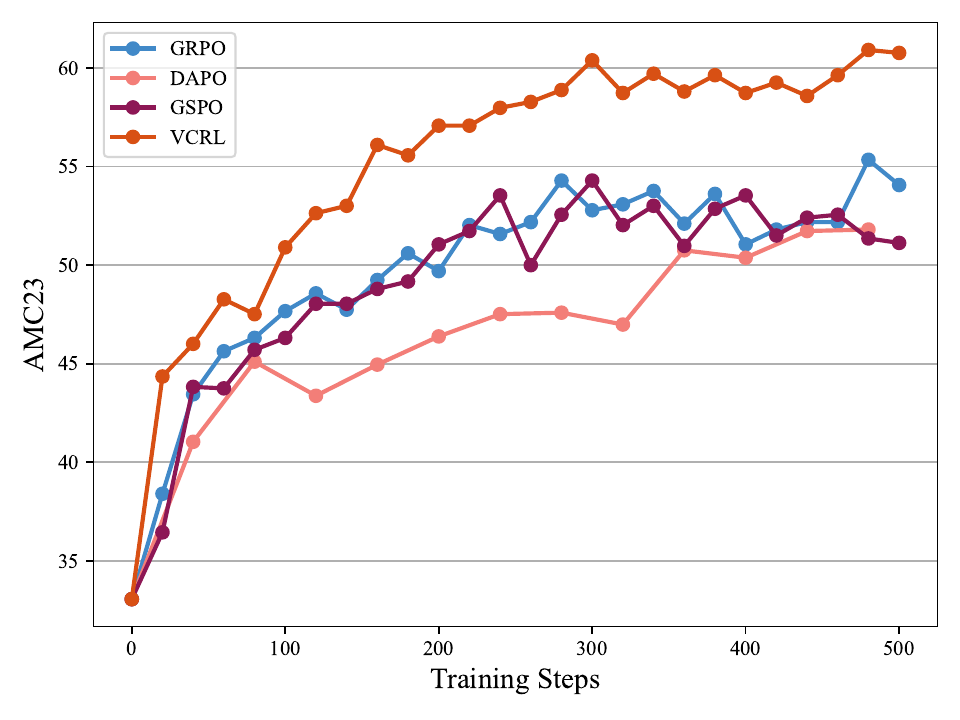}
  \end{subfigure}
  \caption{The performance curve of \textit{Qwen3-4B-Base} on the five benchmarks using various RL methods over training steps.}
  \label{fig:dynamics_qwen3-4b-base}
\end{figure}

\begin{figure}[t]
  \centering
  \begin{subfigure}{0.32\textwidth}
    \includegraphics[width=\linewidth]{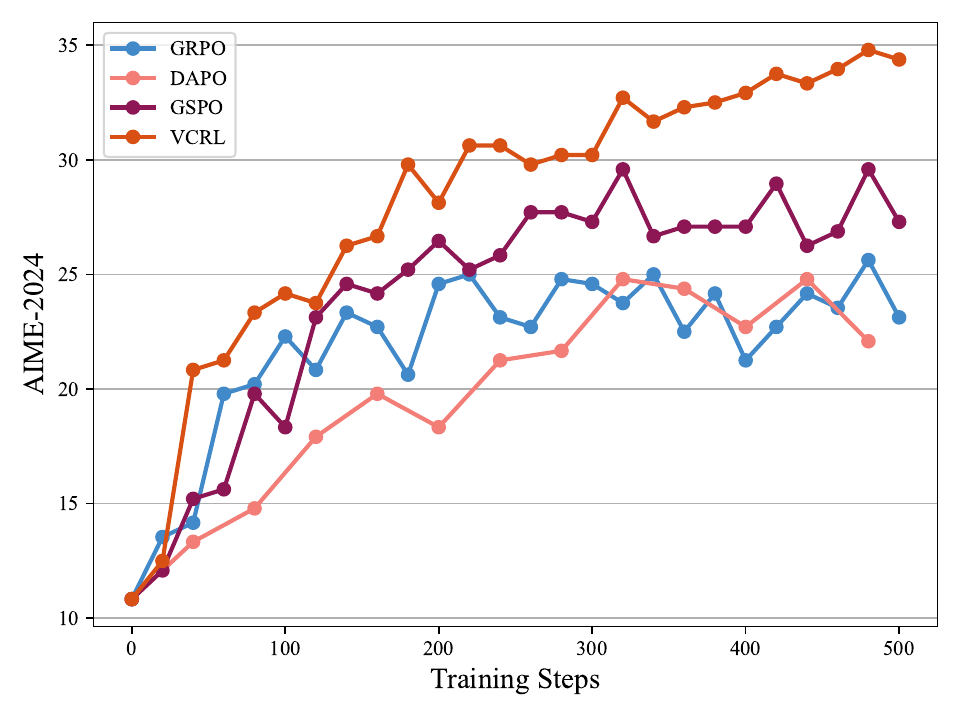}
  \end{subfigure}
  \begin{subfigure}{0.32\textwidth}
    \includegraphics[width=\linewidth]{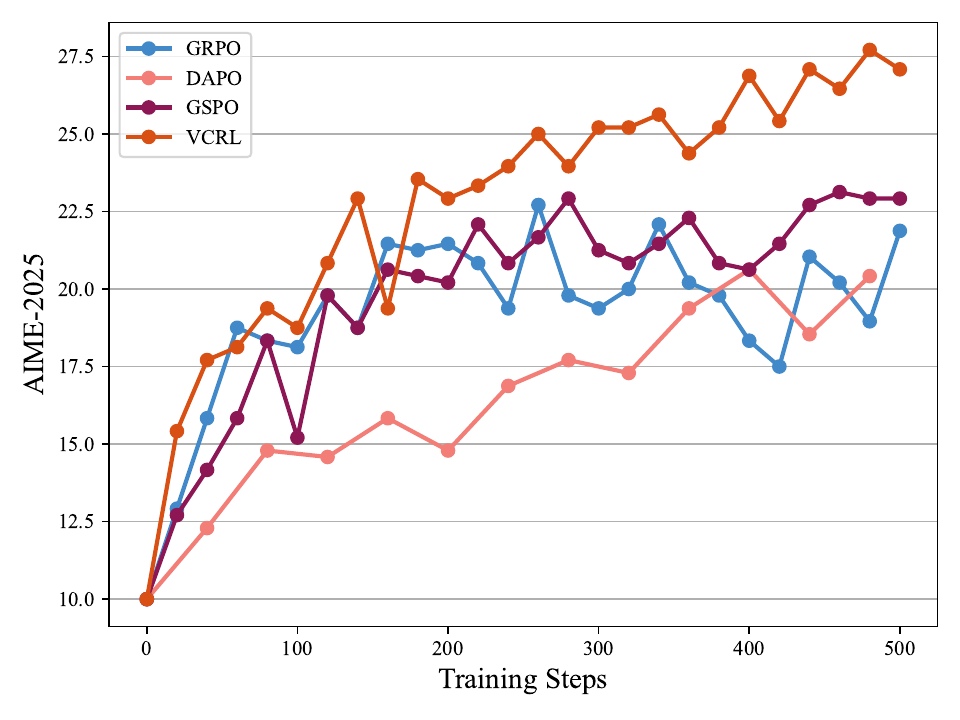}
  \end{subfigure}
  \begin{subfigure}{0.32\textwidth}
    \includegraphics[width=\linewidth]{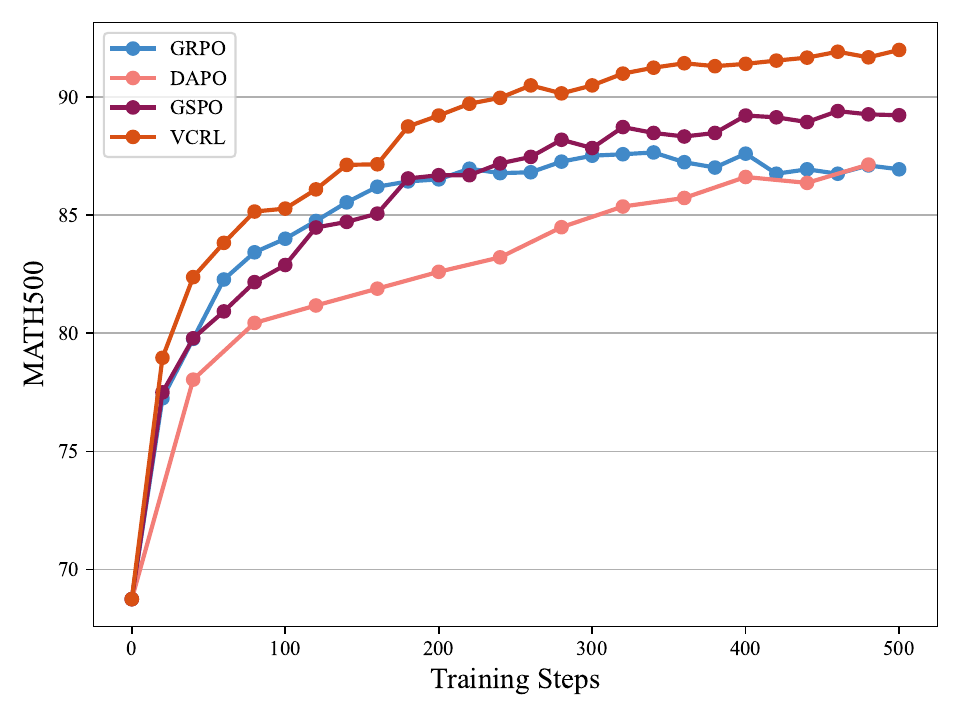}
  \end{subfigure}
  \begin{subfigure}{0.32\textwidth}
    \includegraphics[width=\linewidth]{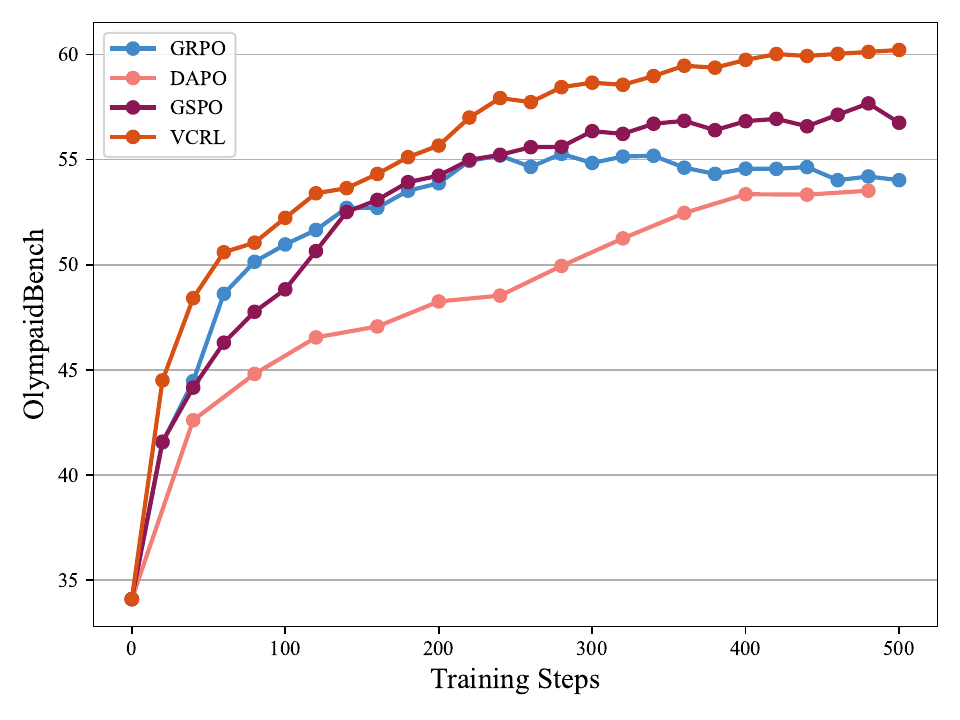}
  \end{subfigure}
  \begin{subfigure}{0.32\textwidth}
    \includegraphics[width=\linewidth]{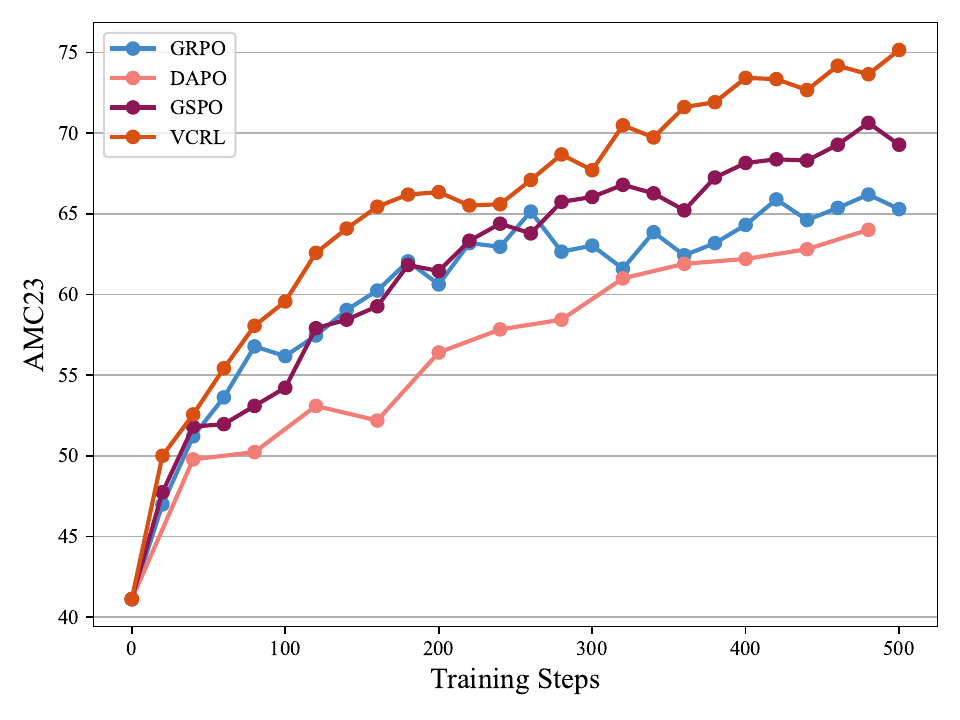}
  \end{subfigure}
  \caption{The performance curve of \textit{Qwen3-8B-Base} on the five benchmarks using various RL methods over training steps.}
  \label{fig:dynamics_qwen3-8b-base}
\end{figure}

For performance trend, the results clearly demonstrate that VCRL consistently and significantly outperforms all other baseline methods across all benchmarks. In terms of the speed of performance improvement, VCRL also has a considerable advantage. In the first 100 training steps, VCRL's performance increases quickly, with its curve staying above the other methods. This is likely due to VCRL's control of high-$p$ training samples in the early stages of model training, which improves training efficiency. Later in training, the performance of all methods generally converges, but VCRL still achieves significantly better final results than the RL baselines. This demonstrates VCRL's strong competitiveness. More training dynamics are in Appendix Section \ref{appendix:training_dynamics}.

\subsection{Ablation Study}

\begin{table}[t]
    \centering
    \caption{Ablation study of the key components of our proposed method VCRL. Starting from a Naive GRPO baseline, we incrementally add \textbf{Variance-based Dynamic Sampling} and \textbf{Replay Learning}. The results on both \textit{Qwen3-4B-Base} and \textit{Qwen3-8B-Base} models show that each component contributes positively to the final performance, validating their effectiveness.}
    \begin{tabular}{lc}
    \toprule
    \textbf{Model} & \textbf{Avg.} \\ \hline
    \textit{Qwen3-4B-Base} & 26.68 \\
    w/ Naive GRPO & 41.76 \\
    w/ Variance-based Dynamic Sampling & 44.73 \\
    w/ Replay Learning & \textbf{49.43} \\ \hline
    \textit{Qwen3-8B-Base} & 32.96 \\
    w/ Naive GRPO & 50.25 \\
    w/ Variance-based Dynamic Sampling & 52.67 \\
    w/ Replay Learning & \textbf{57.76} \\
    \bottomrule
    \end{tabular}
    \label{table:ablation}
\end{table}

To verify the effectiveness of the two core components of our proposed VCRL, we conduct the ablation study, as shown in Table \ref{table:ablation}. Starting from the \textit{Qwen3-4B-Base}, our Naive GRPO baseline improves the average score from 26.68 to 41.76. The integration of Variance-based Dynamic Sampling further pushes this score to 44.73. Finally, the inclusion of Replay Learning achieves the best performance of 49.43, showing the largest marginal gain. This consistent trend on the larger \textit{Qwen3-8B-Base} model robustly validates the positive impact of each component within our VCRL framework.

\section{Related Work}

Recent work on using RL methods with LLMs has greatly improved their ability to handle complex tasks. DeepSeek-R1 \citep{deepseek-r1} introduces a zero RL training framework, which directly trains the base LLM using a simple rule-based reward model. Many RL methods have built on this idea to further boost LLM performance.

Some approaches use novel RL mechanisms to make training more efficient and stable. DAPO \citep{dapo} analyzes GRPO's training and applies four main techniques to improve RL efficiency. Dr. GRPO \citep{dr-grpo} removes the output length and standard deviation terms from GRPO's relative advantage, which increases token efficiency without hurting reasoning performance. SimpleRL-Zoo \citep{simplerl-zoo} runs experiments on different base models and sizes to map out behavioral patterns and suggest future improvements. LUFFY \citep{luffy} enhances RLVR with off-policy reasoning traces, helping to balance imitation and exploration by combining off-policy demonstrations with on-policy rollouts. VAPO \citep{vapo} introduces the first value-model-based RL training framework built on PPO, with seven new techniques to improve training stability and performance. \citet{yeo2025demystifying} investigates how RL helps models create longer reasoning chains, showing which factors matter most for extended CoT reasoning. PVPO \citep{pvpo} presents an efficient reinforcement learning method enhanced by an advantage reference anchor and data pre-sampling.

Other work explores curriculum learning in LLM training for better results. \citet{hammoud2025train} improve GRPO with a reward function that balances task correctness (via verifier feedback), length efficiency, and formatting (using structural tags), leading to higher accuracy and better token efficiency. \citet{feng2025your} propose a self-adaptive curriculum that picks fine-tuning examples based on difficulty scores predicted by pre-trained models. \citet{shen2025thinking} introduce TTI (Test-Time Interaction), an online RL method that adapts rollout lengths using a curriculum approach. \citet{parashar2025curriculum} provide convergence guarantees for easy-to-hard training within an approximate policy iteration framework. RAGEN \citep{wang2025ragen} introduces uncertainty-based filtering to maintain high training efficiency based on active learning \citep{settles2009active}. PODS \citep{xu2025not} generates numerous rollouts in parallel but updating only on informative subset. Curr-ReFT \citep{deng2025boosting} explores the Out-of-Distribution generalization on small-scale Vision Language Models based on the curriculum learning framework. \citet{xi2024reverse} introduce a novel method that employs only outcome supervision to achieve the benefits of process supervision for large language models with a step-wise curriculum.

RLVR \citep{rlvr} is a promising method for boosting reasoning in LLMs, especially in areas like math and programming \citep{flashthink}. \citet{gandhi2025cognitive} show that reasoning behaviors—not just correct answers—drive RL performance gains. \citet{li2025llms} find that the structure of long chains of thought is key for learning, while the details of each reasoning step matter less. \citet{Jean2025Ignore} identify critical tokens in CoTs, which are decision points where models often make mistakes, and suggest increasing exploration around these tokens by changing the KL penalty. \citet{lin2024critical} also find tokens that lead to errors and show that changing them can shift model behavior.
 
\section{Conclusion}

In this paper, we propose VCRL, a curriculum reinforcement learning framework that dynamically controls the difficulty of training samples based on the variance of group rewards. By introducing Dynamic Variance Sampling, VCRL can filter out samples in the training batch that are moderately difficult for the current training model and remove samples that are too difficult or too easy, thereby improving training efficiency. By introducing Replay Learning, VCRL uses a memory bank to maintain the high-$p$ samples in the training batch, further improving training stability. By carefully controlling the difficulty of training samples, VCRL achieves state-of-the-art results on five math benchmarks compared to LLM RL baselines. Further analysis of training dynamics and ablation study also confirm VCRL's effectiveness.

\bibliography{iclr2026_conference}
\bibliographystyle{iclr2026_conference}

\appendix
\section{Training Dynamics}
\label{appendix:training_dynamics}

\begin{figure}[t]
  \centering
  \begin{subfigure}{0.45\textwidth}
    \includegraphics[width=\linewidth]{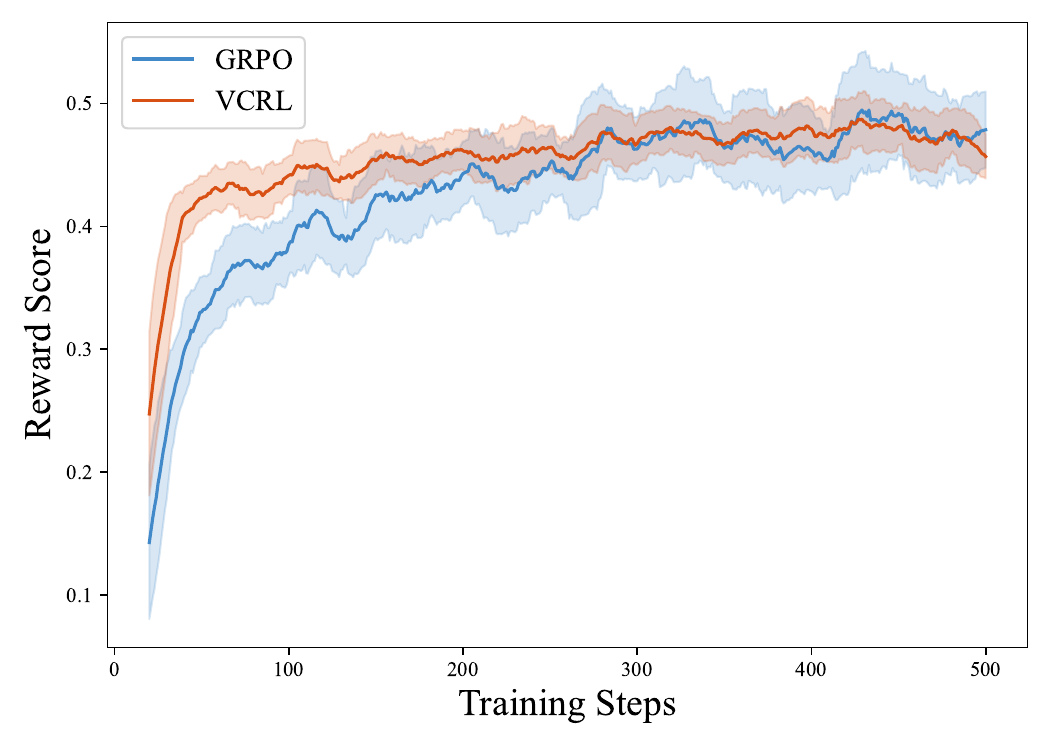}
    \caption{\textit{Qwen3-4B-Base}}
    \label{fig:training_dynamics-qwen3_4b_base-reward}
  \end{subfigure}
  \begin{subfigure}{0.45\textwidth}
    \includegraphics[width=\linewidth]{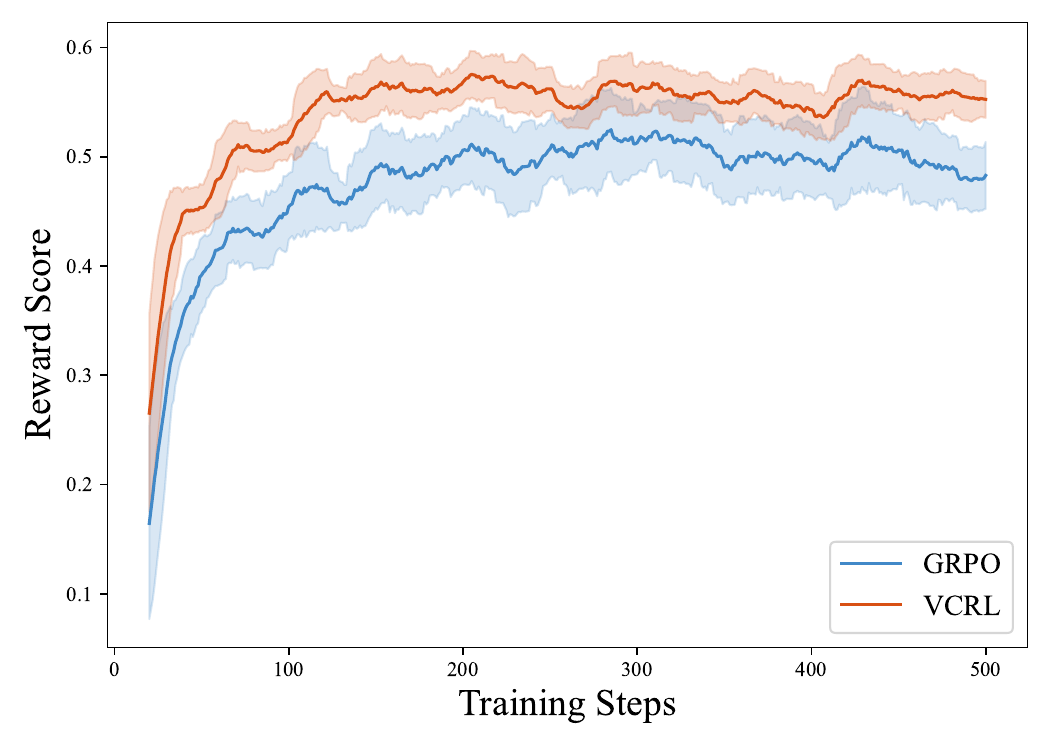}
    \caption{\textit{Qwen3-8B-Base}}
    \label{fig:training_dynamics-qwen3_8b_base-reward}
  \end{subfigure}
  \begin{subfigure}{0.45\textwidth}
    \includegraphics[width=\linewidth]{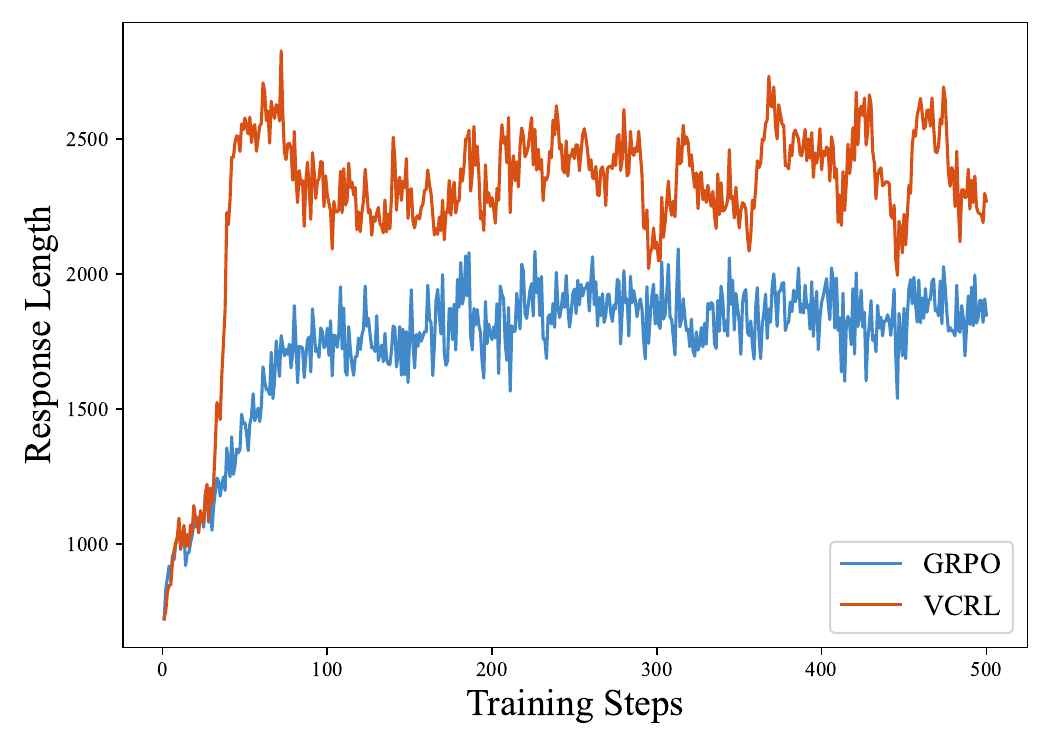}
    \caption{\textit{Qwen3-4B-Base}}
    \label{fig:training_dynamics-qwen3_4b_base-response-length}
  \end{subfigure}
  \begin{subfigure}{0.45\textwidth}
    \includegraphics[width=\linewidth]{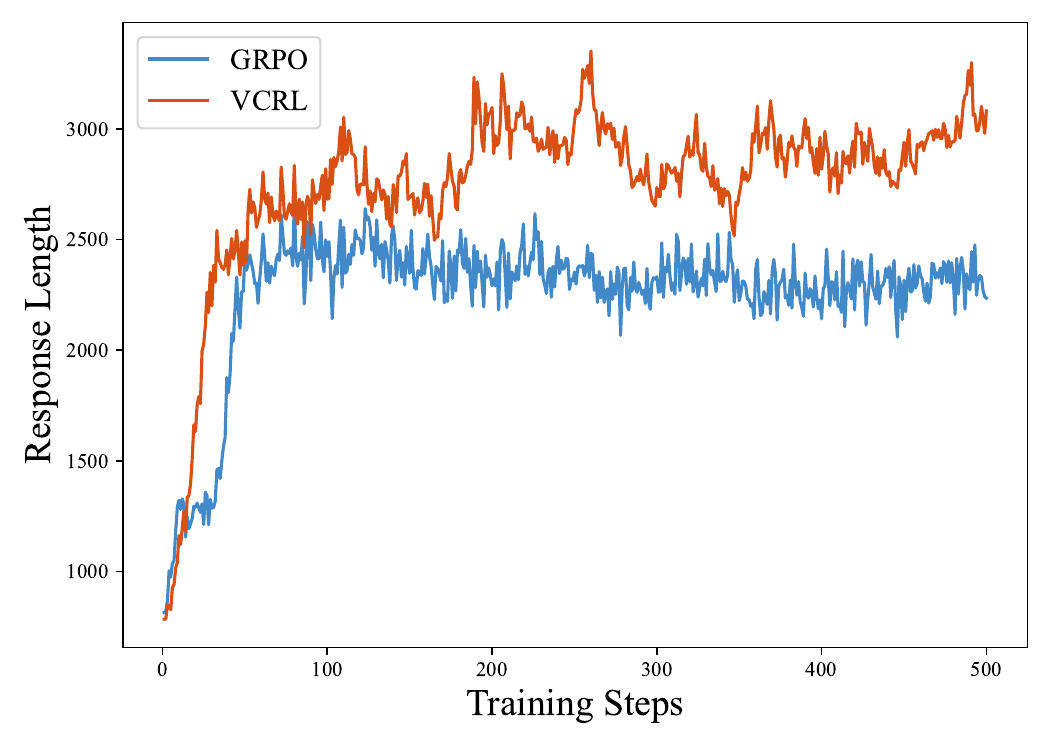}
    \caption{\textit{Qwen3-8B-Base}}
    \label{fig:training_dynamics-qwen3_8b_base-response-length}
  \end{subfigure}
  \begin{subfigure}{0.45\textwidth}
    \includegraphics[width=\linewidth]{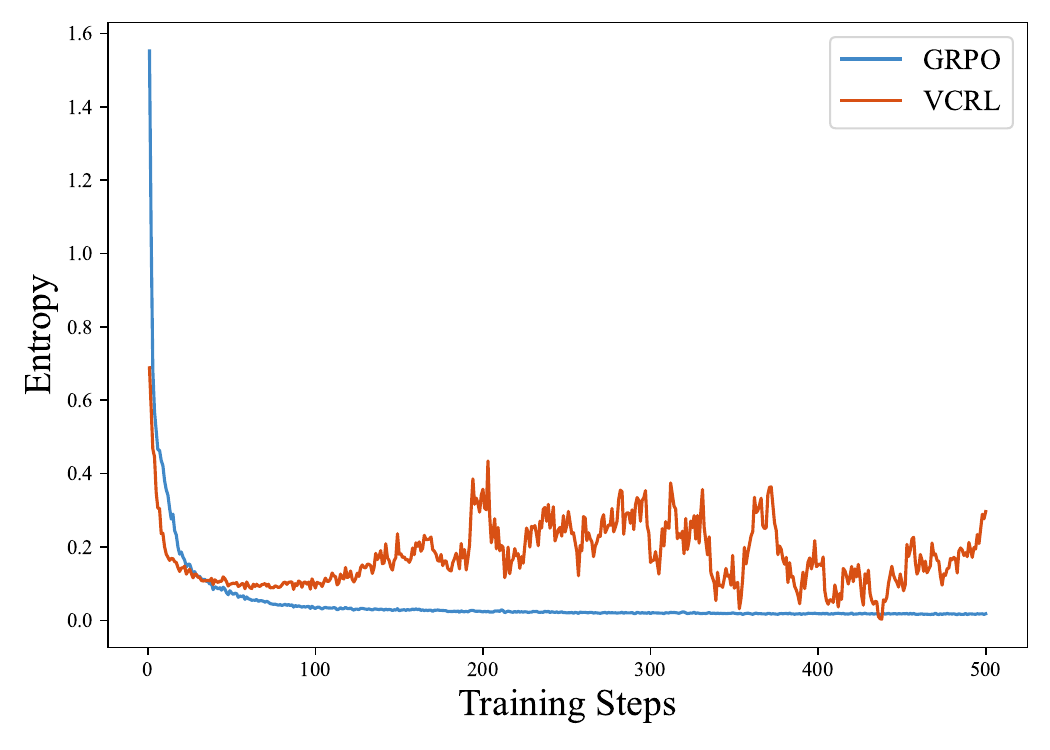}
    \caption{\textit{Qwen3-4B-Base}}
    \label{fig:training_dynamics-qwen3_4b_base-entropy}
  \end{subfigure}
  \begin{subfigure}{0.45\textwidth}
    \includegraphics[width=\linewidth]{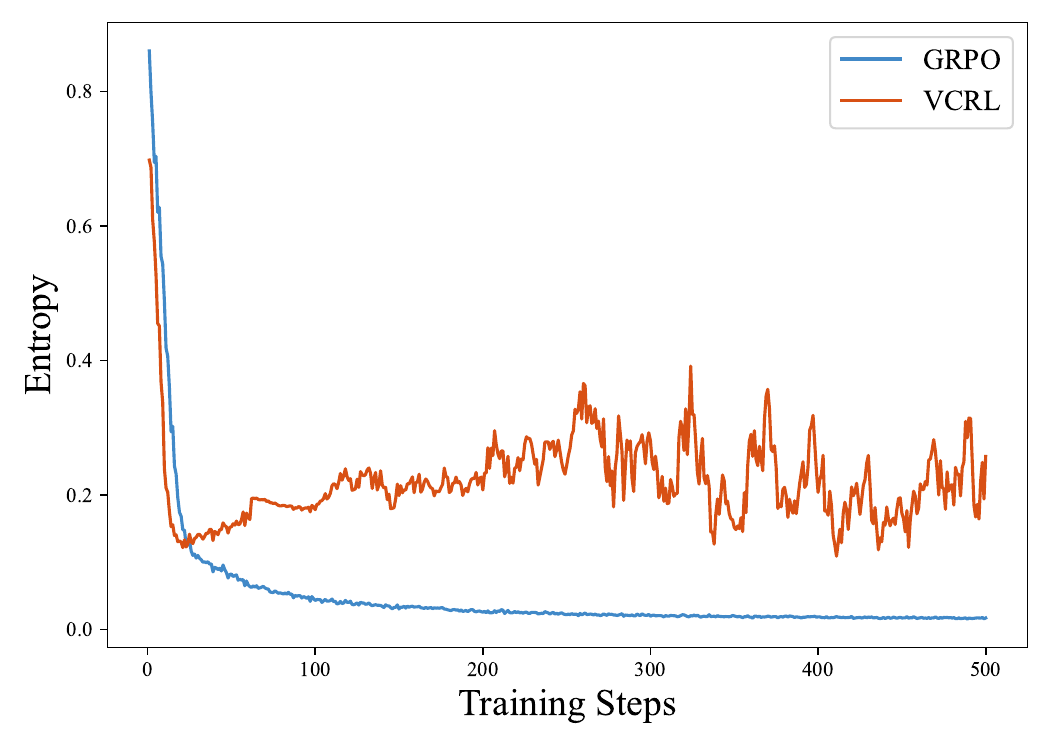}
    \caption{\textit{Qwen3-8B-Base}}
    \label{fig:training_dynamics-qwen3_8b_base-entropy}
  \end{subfigure}
  \caption{The metric curves of reward score, response length, and entropy of VCRL over GRPO based on \textit{Qwen3-4B-Base} and \textit{Qwen3-8B-Base}, which show the dynamics of RL training and serve as essential monitoring indicators to identify potential issues.}
  \label{fig:training_dynamics}
\end{figure}

Compared to GRPO, VCRL introduces two main techniques to improve training efficiency. To further understand their effects, we show the training dynamics shown in Figure \ref{fig:training_dynamics}, including reward score, response length, and entropy. For the reward score curve, in order to simultaneously measure their stability in training dynamics, we use moving average and rolling standard deviation with a window size of 20 for visualization.

\begin{itemize}
    \item \textbf{Reward Score} during training is closely linked to training stability and performance, as shown in Figure \ref{fig:training_dynamics-qwen3_4b_base-reward} and Figure \ref{fig:training_dynamics-qwen3_8b_base-reward}. For both VCRL and GRPO, the reward score rises quickly in the early stages and then slowly improves. For \textit{Qwen3-4B-Base}, before about 270 training steps, VCRL's reward score is much higher than GRPO's. For \textit{Qwen3-8B-Base}, the reward score of VCRl is significantly higher than that of GRPO throughput the training process. Once the reward score stabilizes, VCRL shows much smaller fluctuations than GRPO, as seen in the shaded areas. This highlights VCRL's advantage in training stability.
    \item \textbf{Response Length} relates to how much the model can explore, as shown in Figure \ref{fig:training_dynamics-qwen3_4b_base-response-length} and Figure \ref{fig:training_dynamics-qwen3_8b_base-response-length}. Longer responses help the model develop more complex reasoning during training and boost performance. In the first 100 steps, VCRL and GRPO both show a rapid increase in response length, then level off and fluctuate. VCRL's response length grows much faster early on, especially in first 50 steps, due to the training of high-$p$ samples. After stabilizing, VCRL maintains noticeably longer responses, giving the model more room to explore and optimize its performance.
    \item \textbf{Entropy} shows how uncertain the model is in its generation ability, as seen in Figure \ref{fig:training_dynamics-qwen3_4b_base-entropy} and Figure \ref{fig:training_dynamics-qwen3_8b_base-entropy}. For efficient training, entropy should stay at a reasonable level. If entropy is too low, the model becomes too deterministic and loses its ability to explore. For GRPO, entropy quickly drops below 0.1 within 50 steps and stays very low. In contrast, VCRL keeps entropy at a reasonable level throughout training, which encourages the model to keep exploring.
\end{itemize}

\section{Variance as a Difficulty Metric}
\label{appendix:variance}

\begin{figure}[ht]
    \centering
    \includegraphics[width=0.5\linewidth]{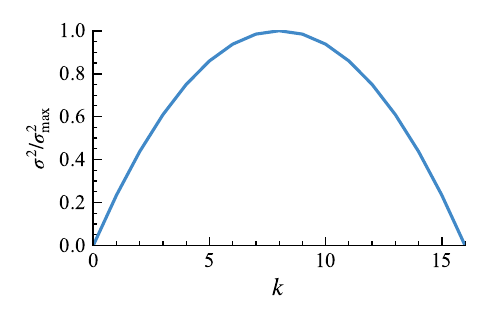}
    \caption{The curve showing how $p = \frac{\sigma^2}{\sigma^2_{\max}}$ changes with the number of successful rollouts $k$ based on group size $G=16$.}
    \label{fig:variance}
\end{figure}

Compared with generation entropy, it is more reasonable to use group variance to measure the difficulty of the current sample for the current training model in VCRL or GRPO. Group reward variance is grounded in its unique ability to identify samples at the cusp of the model's current capabilities.

For a binary reward system (correct/incorrect), variance exhibits a non-monotonic, U-shaped relationship with sample difficulty, as shown in Figure \ref{fig:variance}. Low variance occurs at two extremes. If a sample is too easy, the model consistently succeeds (e.g., all 16 rollouts get a reward of 1), leading to near-zero variance. If a sample is too hard, the model consistently fails (all rewards are 0), also leading to near-zero variance. Peaks when the model's success rate is approximately 50\% (e.g., 8 rollouts succeed and 8 fail). This indicates maximum uncertainty and signifies that the sample is at the precise frontier of the model's ability.

While related to uncertainty, policy generation entropy measures the diversity of the model's actions (tokens). High entropy could mean the model is exploring, but it does not directly map to task-level success. A model could be highly uncertain (high entropy) while generating non-sensical responses that all lead to a reward of 0. Variance, on the other hand, is directly related to the final outcome of the task (the reward), making it a more direct measure of the difficulty relevant to learning. By using a single indicator of group variance, it is possible to filter samples with high uncertainty results, while this task is difficult to accomplish based on the generation entropy.

\section{Policy Gradient Reduction}
\label{appendix:policy_gradient}

According to Equation \ref{eq:grpo} and Equation \ref{eq:vcrl}, we give the following theorem:
\begin{theorem}
\label{theorem:1}
For policy gradient algorithm GRPO and VCRL, from the policy gradient norm perspective, the training of VCRL is more stable than that of GRPO in the expectation, that is, $\mathbb{E}_{\text{VCRL}}\left[ \left\| \nabla_\theta \log \pi_\theta \right\| \right] \le \mathbb{E}_{\text{GRPO}}\left[ \left\| \nabla_\theta \log \pi_\theta \right\| \right]$.
\end{theorem}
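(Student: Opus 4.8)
The plan is to derive the policy gradients of both objectives directly from Equation \ref{eq:grpo} and Equation \ref{eq:vcrl} and compare them term by term. Differentiating the (unclipped part of the) GRPO objective yields a gradient whose per-group contribution is $\frac{1}{G}\sum_i \frac{1}{|y_i|}\sum_t \hat{A}_{i,t}\,\nabla_\theta \log \pi_\theta(y_{i,t}\mid x, y_{i,<t})$, and the VCRL gradient is identical except that the group-$i$ term carries the extra factor $\mathbb{I}(p_i \ge \kappa)$. Thus the only structural difference is that VCRL multiplies the contribution of group $i$ by an indicator equal to $1$ on high-variance groups and $0$ otherwise. I would make this precise first, handling the clipped region separately by either restricting attention to the unclipped regime, where $r_{i,t}(\theta)$ is the sole $\theta$-dependence, or by bounding the clipped contribution, since clipping only shrinks gradient magnitudes.

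Next I would translate ``stability'' into a statement about the expected per-sample norm. Writing $\mathbb{E}_{\text{VCRL}}[\cdot]$ as the expectation over the same query/rollout sampling as GRPO but with each term reweighted by $\mathbb{I}(p_i \ge \kappa)$, the left-hand side becomes $\mathbb{E}[\mathbb{I}(p\ge\kappa)\,\|\nabla_\theta \log \pi_\theta\|]$ while the right-hand side is $\mathbb{E}[\|\nabla_\theta \log \pi_\theta\|]$. Since the indicator takes values in $\{0,1\}$ and the norm is non-negative, monotonicity of expectation gives the inequality directly. The quantitative backbone is the closed form $\sigma^2 = \frac{k(G-k)}{G(G-1)}$ together with the advantage $\hat{A}_i = (r(x,y_i)-\bar r)/\sigma$: because the advantage is normalized by the group standard deviation, groups with small $p$ (equivalently small $\sigma$) generate the largest advantage magnitudes, and these are precisely the groups the indicator suppresses. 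I would use this to argue that discarding the low-$p$ groups removes the largest-norm contributions, reinforcing the inequality rather than relying on the indicator bound alone.

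The main obstacle is pinning down exactly which quantity the norm is taken of and how the two expectations are normalized, because the naive ``drop non-negative terms'' argument is valid only for the expected per-term norm, not for the norm of the summed gradient vector, where sign cancellation between positive-advantage successes and negative-advantage failures can behave non-monotonically. A second subtlety is that the $1/\sigma$ normalization cuts both ways: extreme-$p$ groups produce large individual advantages but contain few minority-outcome rollouts, so one must verify that the per-sample expected magnitude, rather than the raw group sum, is what the theorem controls. I would therefore spend most of the effort fixing these definitions so that the indicator/monotonicity step and the variance-to-advantage magnitude estimate combine cleanly; once the quantity and sampling distribution are fixed, the remaining work is the routine evaluation of $\hat{A}_i$ and $\sigma$ in terms of $k$ and $G$.
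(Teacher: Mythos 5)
Your proposal is correct and takes essentially the same route as the paper's proof: derive both policy gradients, observe that the VCRL integrand differs from the GRPO integrand only by factors bounded by one (the indicator $\mathbb{I}(p_i \ge \kappa)$, and in the paper additionally the importance-sampling ratio $\mathbb{P}(x \in \mathcal{D} \cup \mathcal{M})/\mathbb{P}(x \in \mathcal{D}) \le 1$ that accounts for the memory bank, a step you gloss over but which only reinforces the inequality), then conclude by homogeneity of the norm and monotonicity of expectation. Your concerns about sign cancellation and your side-argument that low-$p$ groups carry the largest advantage magnitudes via $\sigma^2 = \frac{k(G-k)}{G(G-1)}$ are unnecessary: the paper resolves the ambiguity by defining the compared quantities $\mathbb{E}_{\text{GRPO}}\left[\left\|\nabla_\theta \log \pi_\theta\right\|\right]$ and $\mathbb{E}_{\text{VCRL}}\left[\left\|\nabla_\theta \log \pi_\theta\right\|\right]$ as expectations of per-token log-probability gradient norms with the weights $r_{i,t}(\theta)\hat{A}_{i,t}$ stripped out entirely, so the advantage normalization never enters the argument.
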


\begin{proof}
We first give the gradient form of the GRPO objective function (clipping is omitted for brevity) with Policy Gradient Theorem \citep{sutton_rl}:
\begin{align}
\label{eq:grpo_gradient}
    \nabla_\theta \mathcal{J}_\text{GRPO}(\theta)& = \nabla_\theta \mathbb{E}_{x \sim \mathcal{D}, \{y_i\}_{i=1}^G \sim \pi_{\theta_{\text{old}}(\cdot|x)}} \left[ \frac{1}{G} \sum_{i=1}^G \frac{1}{|y_i|} \sum_{t=1}^{|y_i|}  r_{i,t}(\theta)\hat{A}_{i,t} \right] \notag \\
    &= \mathbb{E}_{x \sim \mathcal{D}, \{y_i\}_{i=1}^G \sim \pi_{\theta_{\text{old}}(\cdot|x)}} \left[ \frac{1}{G} \sum_{i=1}^G \frac{1}{|y_i|} \sum_{t=1}^{|y_i|} r_{i, t}(\theta) \hat{A}_{i, t} \nabla_\theta \log \pi_\theta(y_{i,t} | x, y_{i, <t}) \right],
\end{align}
where $r_{i,t}(\theta) = \frac{\pi_\theta(y_{i,t} | x, y_{i, <t})}{\pi_{\theta_{\text{old}}}(y_{i,t} | x, y_{i, <t})}$ is the importance sampling ratio.

We can also derive the gradient of the VCRL objective as follows:
\begin{align}
    \nabla_\theta \mathcal{J}_\text{VCRL}(\theta) &= \nabla_\theta \mathbb{E}_{x \sim \mathcal{D \cup\mathcal{M}}, \{y_i\}_{i=1}^G \sim \pi_{\theta_{\text{old}}(\cdot|x)}} \left[ \frac{1}{G} \sum_{i=1}^G \frac{\mathbb{I}\left(p_i = \frac{\sigma_i^2}{\sigma_{i,\text{max}}^2} \ge \kappa\right)}{|y_i|} \sum_{t=1}^{|y_i|}  r_{i,t}(\theta)\hat{A}_{i,t} \right] \notag \\
    &= \mathbb{E}_{x \sim \mathcal{D \cup\mathcal{M}}, \{y_i\}_{i=1}^G \sim \pi_{\theta_{\text{old}}(\cdot|x)}} \notag \\ &\left[ \frac{1}{G} \sum_{i=1}^G \frac{\mathbb{I}\left(p_i = \frac{\sigma_i^2}{\sigma_{i,\text{max}}^2} \ge \kappa\right)}{|y_i|} \sum_{t=1}^{|y_i|}  r_{i,t}(\theta)\hat{A}_{i,t} \nabla_\theta \log \pi_\theta (y_{i,t} | x, y_{i, <t})\right].
\end{align}

To align the gradients of the two, we use importance sampling to rewrite the gradient of VCRL to remove the term of memory bank $\mathcal{M}$:
\begin{align}
\label{eq:vcrl_gradient}
\nabla_\theta \mathcal{J}_\text{VCRL}(\theta) &= \mathbb{E}_{x \sim \mathcal{D}, \{y_i\}_{i=1}^G \sim \pi_{\theta_{\text{old}}(\cdot|x)}} \notag \\ &\left[ \frac{1}{G} \sum_{i=1}^G \frac{1}{|y_i|} \sum_{t=1}^{|y_i|} r_{i,t}(\theta) \hat{A}_{i,t} \textcolor{blue}{\frac{\mathbb{P}(x \in \mathcal{D} \cup \mathcal{M})}{\mathbb{P}(x \in \mathcal{D})} \mathbb{I}(p_i \ge \kappa) \nabla_\theta \log \pi_\theta (y_{i,t} | x, y_{i, <t})}  \right].
\end{align}

Note that the \textcolor{blue}{blue} part in the Equation \ref{eq:vcrl_gradient} is the key to affecting the contribution of the policy gradient term to the overall gradient.

We simplify the policy gradient terms in Equation \ref{eq:grpo_gradient} and Equation \ref{eq:vcrl_gradient} into the following form for comparison:
\begin{align}
    \mathbb{E}_\text{GRPO} \left[\left\| \nabla_\theta \log \pi_\theta \right\|\right] &= \mathbb{E}_{x \sim \mathcal{D}, \{y_i\}_{i=1}^G \sim \pi_{\theta_{\text{old}}(\cdot|x)}} \left[ \left\| \nabla_\theta \log \pi_\theta(y_{i,t} | x, y_{i, <t}) \right\| \right], \\
    \mathbb{E}_\text{VCRL} \left[\left\| \nabla_\theta \log \pi_\theta \right\|\right] &= \mathbb{E}_{x \sim \mathcal{D}, \{y_i\}_{i=1}^G \sim \pi_{\theta_{\text{old}}(\cdot|x)}} \notag \\ &\left[ \left\| \frac{\mathbb{P}(x \in \mathcal{D} \cup \mathcal{M})}{\mathbb{P}(x \in \mathcal{D})} \mathbb{I}(p_i \ge \kappa) \nabla_\theta \log \pi_\theta (y_{i,t} | x, y_{i, <t}) \right\| \right].
\end{align}

For the training sample $x$, the sampling of the event $x \in \mathcal{D}$ is uniform, so $\mathbb{P}(x \in \mathcal{D}) = \frac{1}{|\mathcal{D}|}$. And according to the nature of sampling probability, we can get $\mathbb{P}(x \in \mathcal{D} \cup \mathcal{M}) \le \mathbb{P}(x \in \mathcal{D})$. Based on the value range of the indicator function, we can also get $\mathbb{I}(p_i \ge \kappa) \le 1$. Using the homogeneity of the norm and above results:
\begin{align}
&~\quad\left\| \frac{\mathbb{P}(x \in \mathcal{D} \cup \mathcal{M})}{\mathbb{P}(x \in \mathcal{D})} \mathbb{I}(p_i \ge \kappa) \nabla_\theta \log \pi_\theta (y_{i,t} | x, y_{i, <t}) \right\| \notag \\&= \frac{\mathbb{P}(x \in \mathcal{D} \cup \mathcal{M})}{\mathbb{P}(x \in \mathcal{D})} \mathbb{I}(p_i \ge \kappa) \left\| \nabla_\theta \log \pi_\theta (y_{i,t} | x, y_{i, <t}) \right\| \notag \\
&\le \mathbb{I}(p_i \ge \kappa) \left\| \nabla_\theta \log \pi_\theta (y_{i,t} | x, y_{i, <t}) \right\| \notag \\
&\le \left\| \nabla_\theta \log \pi_\theta (y_{i,t} | x, y_{i, <t}) \right\|, \notag
\end{align}
which completes the proof.
\end{proof}

Theorem \ref{theorem:1} provides a theoretical guarantee for the training stability of VCRL compared to GRPO. To further illustrate the training stability of VCRL from the perspective of gradient norm, we show the training dynamics as shown in the Figure \ref{fig:training_dynamics-grad_norm}.

Figure \ref{fig:training_dynamics-grad_norm} provides an empirical validation of our proposed VCRL's stability by visualizing the norm of the objective function's gradient, $\| \nabla_\theta \mathcal{J}(\theta) \|$, over the training steps. We compare VCRL against the GRPO baseline on two model scales: \textit{Qwen3-4B-Base} and \textit{Qwen3-8B-Base}. The empirical results unequivocally demonstrate the superiority of VCRL in maintaining a well-behaved optimization trajectory. Specifically, VCRL's gradient norm remains consistently confined to a lower and narrower band, indicating that the policy updates are more measured and stable. Furthermore, the VCRL curve is notably smoother, with significantly fewer and less pronounced transient spikes compared to the GRPO baseline. The frequent, high-magnitude oscillations observed in GRPO's gradient norm are indicative of a more challenging optimization landscape, which can lead to inefficient and unstable training. We posit that the demonstrably smaller and more stable gradient norm engendered by VCRL is an important contributor to its enhanced training efficiency and robust performance.

\begin{figure}
    \centering
    \begin{subfigure}{0.45\textwidth}
    \includegraphics[width=\linewidth]{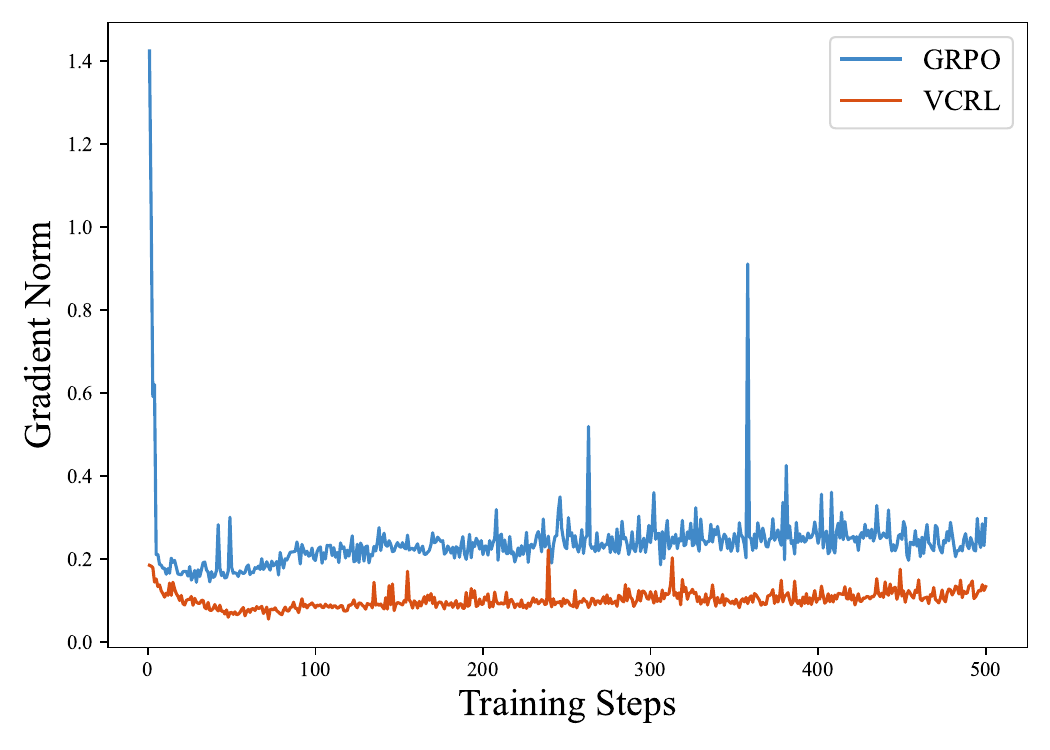}
    \caption{\textit{Qwen3-4B-Base}}
    \label{fig:training_dynamics-qwen3_4b_base-grad_norm}
    \end{subfigure}
    \begin{subfigure}{0.45\textwidth}
    \includegraphics[width=\linewidth]{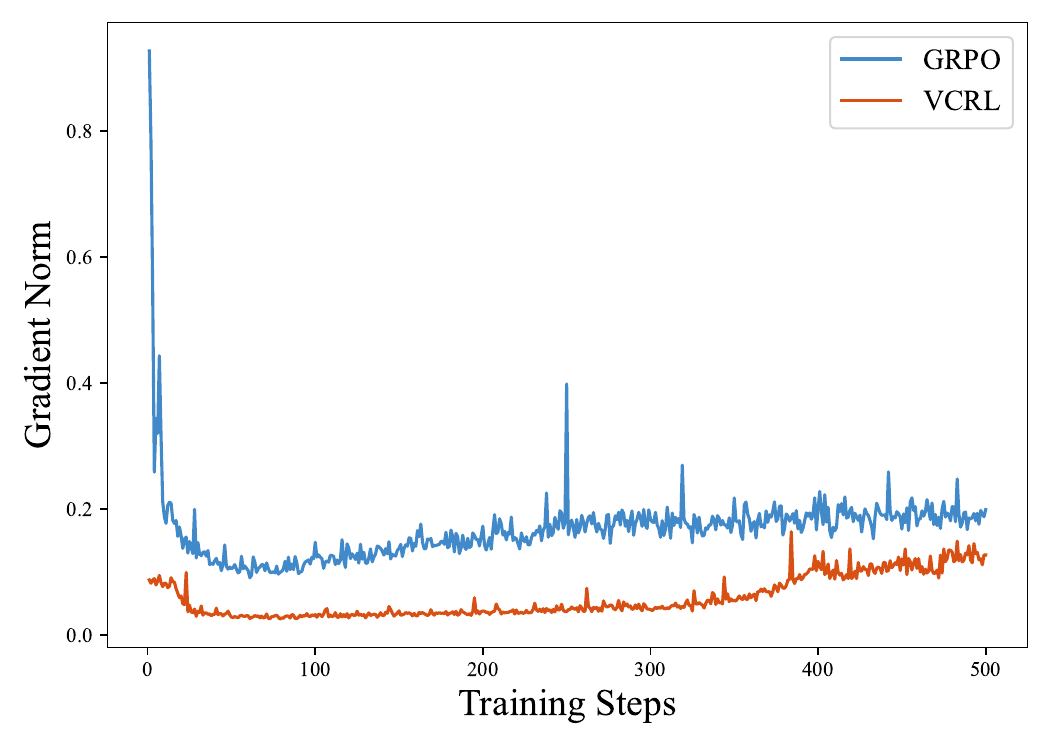}
    \caption{\textit{Qwen3-8B-Base}}
    \label{fig:training_dynamics-qwen3_8b_base-grad_norm}
    \end{subfigure}
    \caption{The training dynamics of objective gradient norm $\| \nabla_\theta \mathcal{J}(\theta) \|$ of VCRL over GRPO based on \textit{Qwen3-4B-Base} and \textit{Qwen3-8B-Base}.}
    \label{fig:training_dynamics-grad_norm}
\end{figure}

\end{document}